\documentclass{article}

\usepackage{graphicx}
\usepackage[preprint]{corl_2026} 

\usepackage{color}
\usepackage{colortbl}

\usepackage{wrapfig}

\usepackage{CJKutf8}
\usepackage{amsmath}
\usepackage{amssymb}
\usepackage{mathtools}
\usepackage{amsthm}

\usepackage{pifont}

\newcommand{\cmark}{\ding{51}}
\newcommand{\xmark}{\ding{55}}

\newtheorem{proposition}{Proposition}

\newtheorem{assumption}{Assumption}
\newtheorem{definition}{Definition}

\definecolor{red_}{RGB}{250,0,0}
\definecolor{lightblue_}{RGB}{0,0,255}
\definecolor{darkgreen_}{RGB}{0,200,0}

\definecolor{darkblue_}{RGB}{0,0,139}
\definecolor{darkyellow_}{RGB}{204,153,0}

\long\def\comment#1{ }

\definecolor{tablerowgray}{gray}{0.92}

\usepackage{booktabs}
\usepackage{multirow}
\usepackage{graphicx}

\usepackage{amsmath}

\newcommand{\venue}[1]{{\scriptsize\textit{(#1)}}}

\title{CT-VAM: A Cerebello-Thalamic-Inspired Vision-Action Model for Efficient Visuomotor Control}

\author{
  Jiacheng Li\thanks{Equal contribution.} \quad
  Yize Guo\footnotemark[1] \quad
  Jiabin Guo\footnotemark[1] \quad
  Qingchen Liu \quad
  Jiahu Qin\thanks{Corresponding author.}\\
  AIRLab, Department of Automation\\
  University of Science and Technology of China\\
  Hefei, China
}

\begin{document}
\maketitle


\begin{abstract}
Vision-language-action models have shown strong promise for robot manipulation, yet raw language is primarily needed to specify task intent rather than to be repeatedly processed during high-frequency low-level execution. Motivated by this separation, we propose a cerebello-thalamic-inspired vision-action model (CT-VAM) for efficient task-conditioned visuomotor control. CT-VAM acts as a compact local execution policy that predicts action chunks from dual-view visual observations, proprioception, and a lightweight task condition, potentially enabling a practical cloud-edge paradigm in which high-level semantic reasoning can be handled by large models while fast closed-loop control runs on local hardware. To fuse heterogeneous inputs effectively, CT-VAM introduces TARS (Thalamic Action Routing Stream), a stream-separated conditional attention decoder that independently routes action, visual and task streams, preventing dense sensory tokens from overwhelming compact task-relevant conditions. With only 68M parameters, CT-VAM achieves LIBERO success rates competitive with substantially larger VLA models, while reducing inference latency. Together with flow-consistent inpainting for asynchronous chunk execution, CT-VAM supports high-frequency control and demonstrates robust real-world deployment on resource-constrained robotic platforms. Project page: \url{https://embodied-ai-research.github.io/ct-vam/}

\end{abstract}

\keywords{Visuomotor Policy; Stream-Separated Attention; Real-Time Action Generation} 



\section{Introduction}


Recent progress in robot learning has increasingly moved toward generalist
policies that combine visual perception, language understanding, and action
generation. Early language-based robotic systems showed that large language
models can provide useful high-level task knowledge or planning priors when
grounded by affordances or executable skills~\cite{huang2022language,
ahn2022saycan,driess2023palm}. More recent vision-language-action (VLA)
models further integrate visual observations, language instructions, and robot
actions into a unified policy, enabling broad generalization across tasks and
embodiments~\cite{brohan2022rt1,brohan2023rt2,openx2023,
black2025pi0}. These advances suggest that language is highly
valuable for task specification, semantic reasoning, and generalization.

\begin{figure}[h]
    \centering
    \includegraphics[width=1\linewidth]{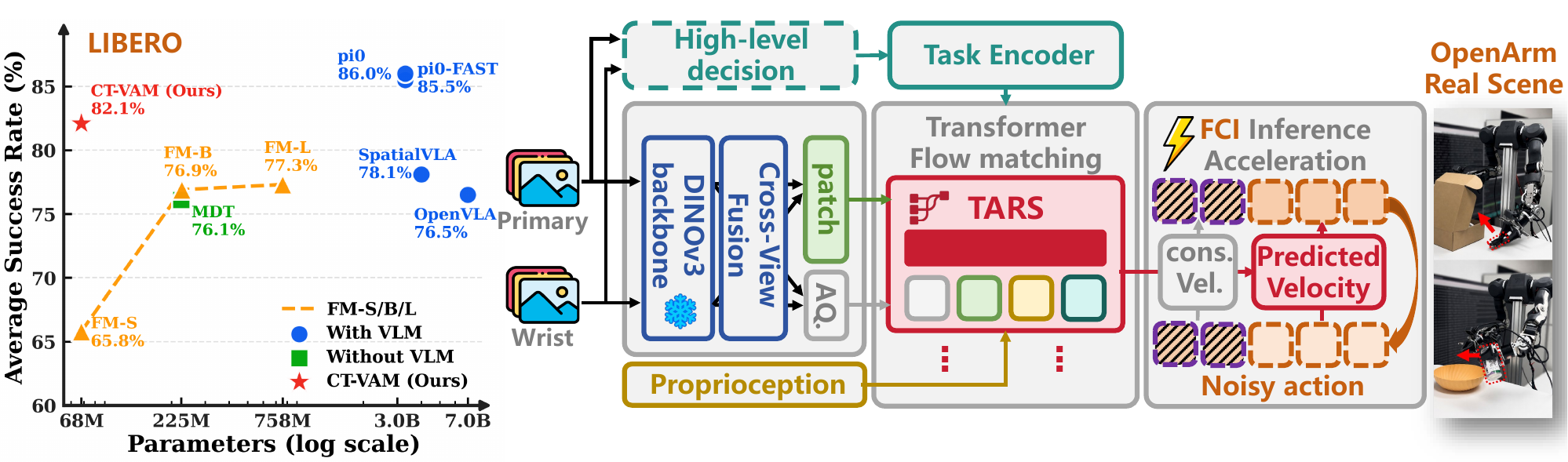}
\caption{
Overview of CT-VAM. 
Left: parameter--performance comparison on LIBERO. 
Middle: CT-VAM architecture. 
Right: real-world deployment. 
The dashed high-level decision module indicates a compatible upstream module for future integration, but it is not introduced or evaluated in this work. 
``AQ." denotes the action query. We use an explicit task encoder to isolate and assess the effectiveness of the low-level visuomotor policy.
}
    \label{fig:Pipeline}
\end{figure}

However, deploying language-conditioned policies directly in the low-level control loop introduces a practical tension. Recent VLA models connect pretrained vision-language backbones to robot action generation, enabling strong semantic generalization but also retaining substantial inference and memory costs during execution \cite{kim2025openvla}. In many manipulation tasks, the raw instruction mainly specifies the task intent, whereas fine-grained motor commands are governed by visual feedback, proprioceptive state, and the procedural dynamics learned by the policy \cite{zhao2023act,chi2025diffusion}. Repeatedly processing language-conditioned representations during receding-horizon control can therefore increase latency and hardware requirements, especially for real-world deployment on resource-constrained robotic platforms \cite{wen2024tinyvla,wang2025bitvla}.

This observation suggests an alternative organization for visuomotor control, one that echoes biological skilled manipulation: humans typically do not reinterpret a linguistic command at every motor step, but execute under sparse high-level intent through rapid sensorimotor feedback and learned motor dynamics. Motivated by this separation between intent-level guidance and fast motor execution, we decouple high-level language grounding from high-frequency visual feedback control. Language is therefore not discarded; it is used to establish or update a compact task condition when the instruction, correction, or subtask changes. Once this action-relevant condition is formed, low-level control can remain visually closed-loop without repeatedly parsing the raw instruction. Based on this principle, we propose a cerebello-thalamic-inspired vision-action model (CT-VAM) for efficient visuomotor control. Given dual-view visual observations, proprioception, and the task condition, CT-VAM predicts action chunks for closed-loop execution. Unlike billion-scale VLA policies that retain a full language-conditioned backbone during action prediction, CT-VAM targets low-level visuomotor execution and is designed for efficient closed-loop, onboard deployment.

The main contributions are summarized as follows.
\begin{itemize}
\item We formulate grounded visuomotor execution, which separates semantic
language grounding from high-frequency visual feedback control. Raw language is
converted into an intent representation, while the low-level policy still uses
visual-proprioceptive observations for closed-loop action generation, avoiding
repeated language processing during execution.

\item We introduce \textsc{TARS}, a stream-separated conditional attention module
that routes and gates action, visual, proprioceptive, and task streams,
preventing dense visual tokens from overwhelming compact but task-relevant
conditions.

\item We introduce flow-consistent inpainting for asynchronous chunk execution, enabling next-chunk inference to overlap with current execution while preserving action continuity. 
{CT-VAM achieves strong simulation and real-world performance, and supports efficient fully onboard deployment on Jetson Orin NX.}

\end{itemize}




\section{Related Work}

\textbf{Vision-Language-Action Models}. Vision-language-action (VLA) models have become a dominant direction for building generalist robot policies. Early language-conditioned robotic systems used large language models to provide task-level knowledge, planning priors, or affordance-grounded decisions \cite{huang2022language,ahn2022saycan,driess2023palm}. More recent VLA models integrate visual observations, language instructions, and robot actions into a unified policy, enabling broad generalization across tasks, scenes, and robot embodiments
\cite{brohan2022rt1,brohan2023rt2,openx2023,kim2025openvla,black2025pi0}.
This formulation is attractive because it can exploit large-scale vision-language pretraining and robot datasets, while using language as a flexible interface for task specification. Recent works further improve VLA
pretraining, spatial representation, and action tokenization \cite{ye2025lapa,qu2025spatialvla,pertsch2025fast}. However, deploying a full language-conditioned policy inside the low-level control loop remains computationally expensive. Since raw instructions often mainly specify task intent, repeatedly processing language during receding-horizon execution can increase inference latency, memory consumption, and hardware requirements, especially on resource-constrained edge devices.

\textbf{Visuomotor Policies for Action Generation}. Visuomotor imitation policies offer an efficient route to low-level robot control. Action Chunking Transformer (ACT) predicts short action sequences for fine-grained manipulation \cite{zhao2023act}, while Diffusion Policy formulates visuomotor control as conditional denoising over action trajectories \cite{chi2025diffusion}. Subsequent transformer-based diffusion and goal-conditioned policies extend this paradigm to longer-horizon and more diverse manipulation settings~\cite{reuss2024mdt,guo2026dg}, and flow matching or rectified flow provides efficient alternatives for continuous action generation~\cite{lipman2022flowmatching,liu2022rectifiedflow}. However, these policies typically fuse visual, proprioceptive, task, and action-related signals through concatenation, pooling, or shared attention, with limited explicit control over their relative contributions. Since these signals play different roles in visuomotor control, such implicit fusion may be suboptimal for action prediction.

\section{From Raw Language to Grounded Visuomotor Execution}
\label{sec:language_grounding}

This section formalizes the role of language in visuomotor control. We consider
a common execution setting where a raw instruction is first grounded into an
internal intent representation, and the subsequent low-level control is carried
out by a learned visuomotor policy. In this setting, language is needed to form
or update the intent, but it does not need to be repeatedly parsed at each
control step.

Let \(L\) denote a raw natural-language instruction, and let \(X_0\) denote the
context available at the grounding stage, such as the initial visual observation
or scene context. A grounding process maps the instruction and context into a
grounded intent representation: $G \sim q_{\phi}(\cdot \mid L, X_0)$. Here, \(G\) denotes a control-relevant intent representation, such as the one-hot
task identifier used in our implementation. More detailed interpretations of
\(G\) are provided in Appendix~\ref{app:theoretical_details}. At time \(t\), the controller observes the recent visual-proprioceptive history $O_t \triangleq \left(V_{t-t_h:t}, P_{t-t_h:t}\right)$, where \(V_t\) denotes the visual observation, \(P_t\) denotes the proprioceptive
observation, and \(t_h\) is a finite history horizon. The low-level controller is
represented by a learned visuomotor policy $A_t \sim \pi_{\theta}(\cdot \mid G, O_t)$, where \(\theta\) denotes the learned parameters of the policy. In this work, we
treat \(\theta\) as an implicit procedural memory acquired from visuomotor
training. The key question is whether the raw language instruction still provides
additional action-relevant information after the grounded intent is available.
We quantify this by the residual raw-language influence:
\begin{equation}
    \Delta_t
    \triangleq
    I(A_t; L \mid G, O_t).
    \label{eq:residual_language_influence}
\end{equation}
This quantity measures the information about the next action that remains in the
raw instruction after conditioning on both the grounded intent \(G\) and the
execution context \(O_t\).

\begin{definition}
\label{def:control_sufficient_intent}
A grounded intent representation \(G\) is \(\epsilon_t\)-control-sufficient for
the raw instruction \(L\) at time \(t\) if
\begin{equation}
    I(A_t; L \mid G, O_t) \leq \epsilon_t .
    \label{eq:epsilon_control_sufficiency}
\end{equation}
When \(\epsilon_t=0\), \(G\) is exactly control-sufficient, and the next action is
conditionally independent of the raw instruction.
\end{definition}


\begin{assumption}\label{assump:approx_action_sufficient_grounding}
The grounded intent \(G\) approximately preserves the action-relevant
information in the raw instruction \(L\). Specifically, after conditioning on
\(G\) and the execution context \(O_t\), replacing the raw instruction by the
grounded intent induces only a small change in the next-action distribution:
\begin{equation}
    D_{\mathrm{KL}}
    \left(
    p(A_t \mid l,g,o_t)
    \,\Vert\,
    p(A_t \mid g,o_t)
    \right)
    \leq \epsilon_t,
    \quad
    \forall (l,g,o_t)\in \operatorname{supp}(L,G,O_t),
    \label{eq:approx_action_sufficient_grounding}
\end{equation}
where \(\operatorname{supp}(\cdot)\) denotes the support of a distribution.
\end{assumption}

\begin{proposition}
\label{prop:epsilon_control_sufficiency}
Under Assumption~\ref{assump:approx_action_sufficient_grounding}, the grounded
intent \(G\) is \(\epsilon_t\)-control-sufficient for the raw instruction \(L\) at
time \(t\), i.e.,
\begin{equation}
    I(A_t;L \mid G,O_t) \leq \epsilon_t .
    \label{eq:prop_control_sufficiency}
\end{equation}
\end{proposition}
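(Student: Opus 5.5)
The plan is to write the conditional mutual information as an expected Kullback--Leibler divergence and then bound the integrand pointwise using Assumption~\ref{assump:approx_action_sufficient_grounding}. By definition,
\begin{equation*}
    I(A_t;L \mid G,O_t)
    =
    \mathbb{E}_{(l,g,o_t)\sim p(L,G,O_t)}
    \left[
    D_{\mathrm{KL}}\left(
    p(A_t \mid l,g,o_t)
    \,\Vert\,
    p(A_t \mid g,o_t)
    \right)
    \right].
\end{equation*}
I will check this identity directly. Expand $I(A_t;L\mid G,O_t)=\mathbb{E}\left[\log \frac{p(A_t,L\mid G,O_t)}{p(A_t\mid G,O_t)\,p(L\mid G,O_t)}\right]$. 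Then factor $p(A_t,L\mid G,O_t)=p(L\mid G,O_t)\,p(A_t\mid L,G,O_t)$ and cancel the $p(L\mid G,O_t)$ terms. The expectation over $A_t$ conditional on $(L,G,O_t)$ then gives the inner KL. This uses only the tower property and requires no assumption beyond the conditional densities being well defined.

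With this representation, Assumption~\ref{assump:approx_action_sufficient_grounding} bounds the integrand by $\epsilon_t$ at every point of $\operatorname{supp}(L,G,O_t)$. The outer expectation only charges that support, since its complement has probability zero. Monotonicity of expectation therefore gives $I(A_t;L\mid G,O_t)\le \mathbb{E}[\epsilon_t]=\epsilon_t$, because $\epsilon_t$ does not depend on $(l,g,o_t)$. By Definition~\ref{def:control_sufficient_intent}, this is exactly $\epsilon_t$-control-sufficiency. The equality case $\epsilon_t=0$ forces the KL to vanish almost surely, which gives conditional independence, matching the remark in the definition.

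The only delicate step is measure-theoretic. We must justify the chain-rule expansion and the conditional distributions for general (possibly continuous or mixed) $A_t$, $L$, $G$, $O_t$. I would handle this with regular conditional distributions and the Radon--Nikodym form of KL. Note that $p(A_t\mid l,g,o_t)\ll p(A_t\mid g,o_t)$ holds almost surely, because the mixture dominates its components. Hence the integrand is well defined, possibly $+\infty$, but the assumption rules that out on the support. For the discrete case of a one-hot $G$ and finite-vocabulary $L$, the argument is just finite sums.
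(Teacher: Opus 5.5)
Your proof is correct and follows the paper's argument: write $I(A_t;L\mid G,O_t)$ as the expected KL divergence between $p(A_t\mid l,g,o_t)$ and $p(A_t\mid g,o_t)$, bound the integrand by $\epsilon_t$ on the support using Assumption~\ref{assump:approx_action_sufficient_grounding}, and take expectations. One small correction to your measure-theoretic aside: a continuous mixture need not dominate its components (for example, $A_t=L$ uniform on $[0,1]$ gives $p(A_t\mid l)=\delta_l$, which is not absolutely continuous with respect to the uniform law), but this does not affect the proof, since you already allow the KL to be $+\infty$ and the assumption excludes that case on the support.
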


The proof and the scope of Assumption~\ref{assump:approx_action_sufficient_grounding}
are provided in Appendix~\ref{app:theoretical_details}. Proposition~\ref{prop:epsilon_control_sufficiency}
formalizes the role of grounding as a control-sufficient representation: once
the action-relevant intent has been captured by \(G\), the low-level policy can
condition on \(G\) and \(O_t\) without repeatedly processing the raw instruction
\(L\). If a new instruction is issued or the current subtask changes, \(G\) can
be updated accordingly; the key point is that within a subtask, \(G\) can serve
as a stable task condition for closed-loop visuomotor execution.

\section{Method}
We study task-conditioned visuomotor policy learning from synchronized dual-view
observations. At each control step, the policy receives a primary camera history
\(I^{\mathrm{pri}}_{1:T_o}\), a wrist camera history
\(I^{\mathrm{wri}}_{1:T_o}\), proprioceptive states \(s_{1:T_o}\), and a task
identifier \(y\in\{0,1\}^{N_{\mathrm{task}}}\). It predicts an action chunk 
\(a_{1:H}\) through
\begin{equation}
\pi_\theta
\left(
a_{1:H}
\,\middle|\,
I^{\mathrm{pri}}_{1:T_o},
I^{\mathrm{wri}}_{1:T_o},
s_{1:T_o},
y
\right),
\qquad
a_h\in\mathbb{R}^{d_a},
\label{eq:ct_vam_policy}
\end{equation}
where \(H\) is the action chunk length, $T_o$ is the observation history length, and \(d_a\) is the action dimension.
We denote the flattened action chunk by
\begin{equation}
A=\operatorname{vec}(a_{1:H})\in\mathbb{R}^{H d_a}.
\label{eq:flatten_action_chunk}
\end{equation}

As illustrated in Figure \ref{fig:Pipeline}, CT-VAM consists of three main
components. First, synchronized dual-view observations are encoded into
layer-wise visual memories for action generation. Second, Thalamic Action Routing Stream (\textsc{TARS}) performs stream-separated conditional attention
to regulate the interaction among action, visual, proprioceptive, and task
streams. Third, flow-consistent inpainting (FCI) enables asynchronous chunk
prediction during robot execution while preserving boundary consistency between
consecutive action chunks. Detailed visual conditioning, rectified-flow training,
and the functional cerebello-thalamic analogy are provided in
Appendix~\ref{app:visual_conditioning_details},
Appendix~\ref{app:rectified_flow_details}, and
Appendix~\ref{app:functional_analogy}, respectively.

\begin{figure}[h]
    \centering
    \includegraphics[width=0.85\linewidth]{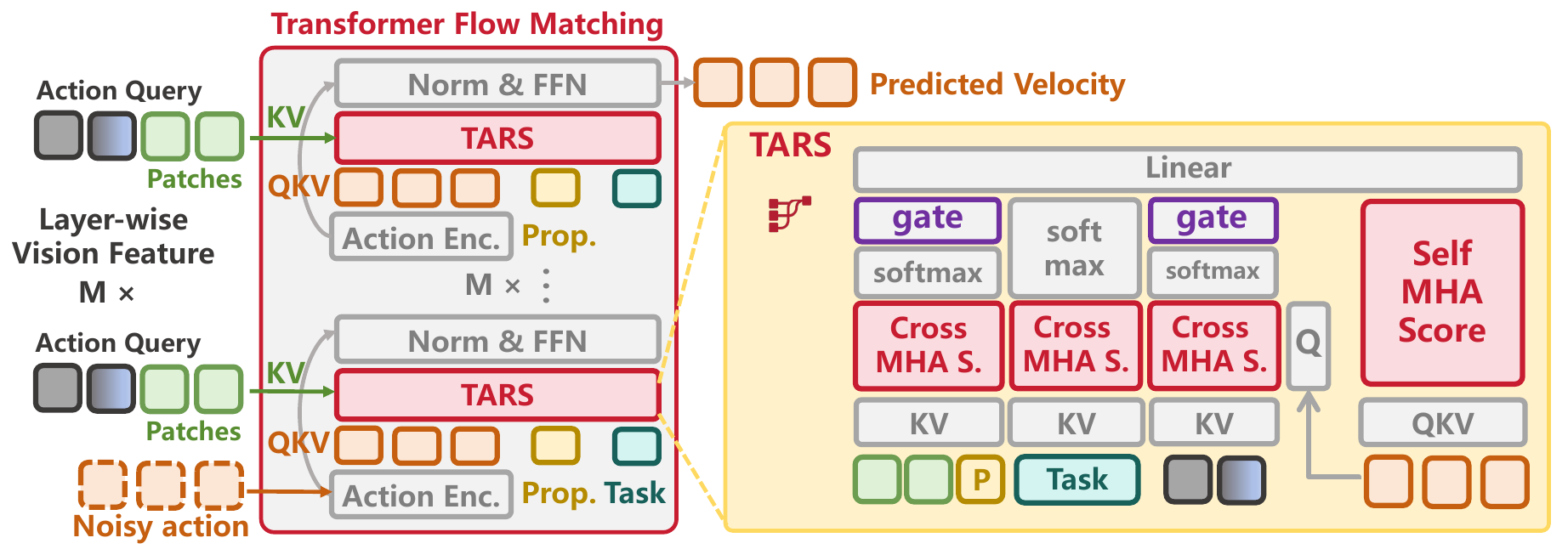}
    \caption{Overview of the proposed TARS.
    TARS updates action tokens by attending to four separated streams: the
    current action tokens, action query, dense visual-proprioceptive
    memories, and the task token. Each stream is normalized independently before
    gated aggregation.}
    \label{fig:tars_framework}
\end{figure}

\subsection{Thalamic Action Routing Stream}
\label{subsec:tars}

The action decoder is a transformer-based rectified-flow model. At flow time
\(\tau\in[0,1]\), TARS operates on a noised action state \(A_\tau\), which is
embedded into the initial action tokens before the first decoder layer
\begin{equation}
X_{0}
=
\phi_a(\operatorname{reshape}(A_\tau))
+
E_{\mathrm{act}}
\in
\mathbb{R}^{H\times d},
\label{eq:action_token_embedding}
\end{equation}
where \(\phi_a\) is the action encoder, \(E_{\mathrm{act}}\) is a learnable
action positional embedding, \(H\) is the action chunk length, and \(d\) is the
decoder hidden dimension.

TARS also maintains a set of learnable action queries $E_{\mathrm{aq}}\in\mathbb{R}^{N_q\times d}$. These action queries are initialized from the register tokens of the pretrained visual backbone \cite{Simeoni2025DINOv3} and are then optimized jointly with the policy.
They serve as decoder-side action refinement slots for organizing action-chunk
generation. The detailed initialization and expansion rule is provided in
Appendix~\ref{app:action-query-initialization}.

For each layer \(\ell\), the dual-view encoder provides a dense spatial
memory \(D_\ell\), which is constructed from fused patch tokens. The
proprioceptive history is encoded as
\begin{equation}
S
=
\phi_s(s_{1:T_o})
\in
\mathbb{R}^{T_o\times d},
\label{eq:proprio_embedding}
\end{equation}
where \(\phi_s(\cdot)\) is the proprioceptive encoder. The task identifier is
embedded into a task token:
\begin{equation}
r
=
yW_{\mathrm{task}},
\qquad
W_{\mathrm{task}}
\in
\mathbb{R}^{N_{\mathrm{task}}\times d}.
\label{eq:task_embedding_main}
\end{equation}

At decoder layer \(\ell\), TARS conditions the action tokens on four separated
streams:
\begin{equation}
\begin{aligned}
\mathcal{M}_{\ell}^{\mathrm{self}}  = X_\ell, \quad
\mathcal{M}_{\ell}^{\mathrm{aq}}    = E_{\mathrm{aq}}, \quad
\mathcal{M}_{\ell}^{\mathrm{dense}} = [D_\ell; S], \quad
\mathcal{M}_{\ell}^{\mathrm{task}}  = \{r\}.
\end{aligned}
\label{eq:tars_streams}
\end{equation}
The self stream represents the current action tokens, the action-query stream
provides learnable register-initialized action refinement slots, the dense
stream provides patch-level visual evidence and proprioceptive information, and
the task stream provides the compact task condition.

Queries are produced from the action tokens, while each memory stream has its
own key-value projection:
\begin{equation}
\begin{aligned}
Q_{\ell}^{a}
&=
\operatorname{LN}(X_\ell)W_q, \\
K_{\ell}^{b}
&=
\operatorname{LN}(\mathcal{M}_{\ell}^{b})W_k^{b}, \\
V_{\ell}^{b}
&=
\operatorname{LN}(\mathcal{M}_{\ell}^{b})W_v^{b}.
\end{aligned}
\label{eq:tars_qkv}
\end{equation}
Here \(b\in\mathcal{B}\), with $\mathcal{B}
=
\{\mathrm{self},\mathrm{aq},\mathrm{dense},\mathrm{task}\}$. For each stream \(b\), we compute the stream-specific attention output:
\begin{equation}
\mathcal{A}_{b}
=
\operatorname{Softmax}
\left(
\frac{
Q_{\ell}^{a}(K_{\ell}^{b})^{\top}
}{
\sqrt{d_h}
}
\right)
V_{\ell}^{b},
\label{eq:tars_stream_attention}
\end{equation}
where \(d_h\) is the attention head dimension.

The stream outputs are aggregated by explicit stream gates:
\begin{equation}
\hat{X}_{\ell}
=
X_{\ell}
+
\left(
\sum_{b\in\mathcal{B}}
\gamma_b\mathcal{A}_b
\right)
W_o,
\label{eq:tars_gated_aggregation}
\end{equation}
where \(\gamma_{\mathrm{self}}=\gamma_{\mathrm{task}}=1\), \(W_o\) is the
output projection matrix, and \(\gamma_{\mathrm{aq}}\) and
\(\gamma_{\mathrm{dense}}\) are learned gates.

The layer output is then updated by a time-conditioned feed-forward block:
\begin{equation}
X_{\ell+1}
=
\hat{X}_{\ell}
+
\operatorname{FFN}
\left(
\operatorname{LN}(\hat{X}_{\ell})+e(\tau)
\right),
\label{eq:tars_ffn}
\end{equation}
where \(X_\ell\) denotes the action tokens at decoder layer \(\ell\).

Unlike shared-softmax attention in \cite{wang2026vlaadapter}, TARS normalizes each stream independently
before aggregation. This prevents dense visual-proprioceptive memories from
dominating compact but task-relevant streams simply due to their larger token
count. A detailed comparison between shared-softmax attention and
stream-separated normalization is provided in
Appendix~\ref{app:tars_stream_normalization}.

\subsection{Efficient On-Device Inference and Real-Time Execution}
\label{subsec:efficient_inference}

By removing raw language from the low-level execution loop and conditioning the
policy only on visual observations, proprioception, and a compact task
representation, CT-VAM keeps the model lightweight with only 68M parameters when using the DINOv3-S+ visual backbone \cite{Simeoni2025DINOv3}).
This compact design significantly reduces the per-chunk inference latency.
Detailed latency comparisons are provided in
Appendix~\ref{app:real_time_equivalent_analysis}. Nevertheless, this latency
still cannot be ignored in real-time robot execution. Since executing an action
chunk already consumes physical time, synchronous chunk generation adds
inference latency on top of the action execution time, thereby slowing down
closed-loop control.

Therefore, the remaining inference time should be overlapped with action
execution rather than being paid sequentially. Since CT-VAM has a low memory
footprint, it supports an asynchronous inference pipeline that predicts the next
action chunk while the current chunk is still being executed. The remaining
challenge is that independently generated chunks may be inconsistent at their
boundary. We therefore introduce \emph{flow-consistent inpainting} (FCI), which
imposes rectified-flow-consistent overlap constraints to ensure smooth
transitions between consecutive chunks.

To quantify the benefit of asynchronous execution, we use a real-time equivalent
execution metric that separates physical action execution from inference latency
exposed on the control path. For one episode executed at a target control period
\(\Delta t\), we define
\begin{equation}
T
=
N_{\mathrm{actions}}\Delta t
+
S_{\mathrm{method}},
\label{eq:tmethod-v2}
\end{equation}
where $T$ is the real-time equivalent episode duration, \(N_{\mathrm{actions}}\) is the total number of low-level actions executed
in the episode, and \(S_{\mathrm{method}}\) is the \emph{exposed stall}, i.e.,
the portion of inference latency that cannot be hidden by action execution and
must still be paid sequentially. This metric excludes simulator-specific
overheads such as rendering or physics stepping.

For the proposed FCI scheme, the first action chunk is generated synchronously.
The robot then starts executing this chunk, while the next inference is launched
in parallel, as illustrated in Figure~\ref{fig:flow_consistent_inpainting}. When
the current chunk has \(K_{\mathrm{ov}}\) actions remaining, these remaining
actions define an overlap segment
\begin{equation}
Y_{\mathrm{ov}}\in\mathbb{R}^{K_{\mathrm{ov}}\times d_a}.
\label{eq:overlap_segment}
\end{equation}
The new chunk should match this segment at its beginning so that switching from
the current chunk to the new chunk does not introduce a discontinuity in the
executed action stream.

\begin{figure}[htb]
    \centering
    \includegraphics[width=0.85\linewidth]{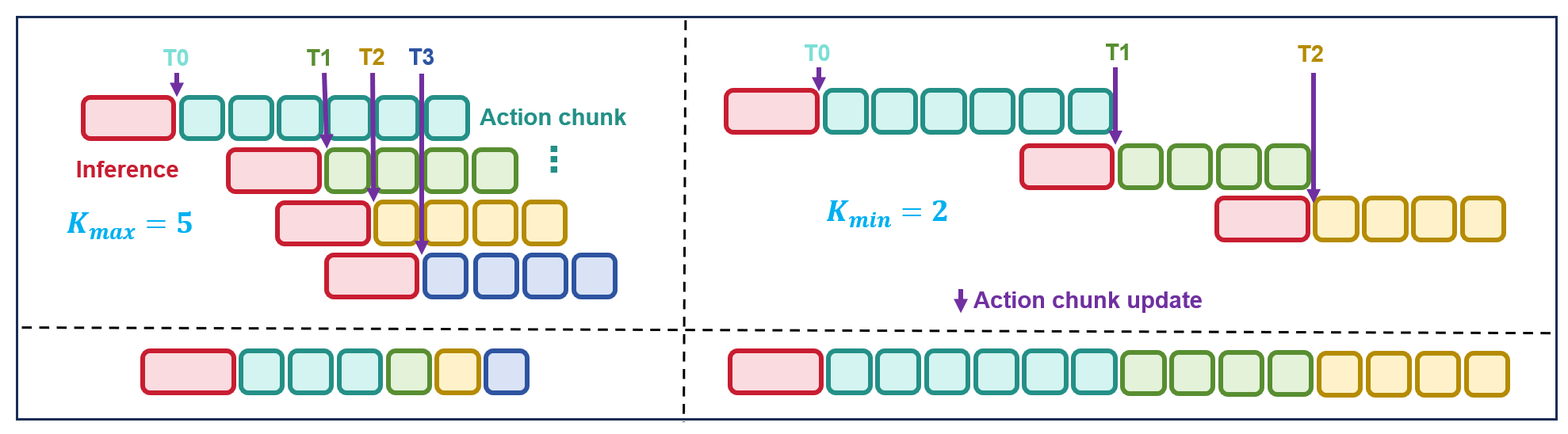}
    \caption{{Illustration of the proposed flow-consistent inpainting scheme under the maximum (left) and minimum (right) \(K_{\mathrm{ov}}\) settings.
    The next action chunk is predicted before the current chunk is exhausted,
    and its overlap region is constrained to follow the rectified-flow trajectory
    toward the remaining actions of the current chunk. This ensures boundary
    continuity while keeping the decoder input consistent with the training
    distribution.}}
    \label{fig:flow_consistent_inpainting}
\end{figure}

A direct hard clamp of the overlap actions would satisfy boundary continuity but
would break the rectified-flow noise-to-data trajectory used during training.
Instead, FCI constrains the overlap region to follow the same interpolation form
as rectified-flow training. For each new chunk, we sample a fixed overlap noise:
\begin{equation}
\epsilon_{\mathrm{ov}}
\sim
\mathcal{U}[-1,1]^{K_{\mathrm{ov}}\times d_a}.
\label{eq:overlap_noise}
\end{equation}
At flow step \(i\), with
\begin{equation}
\tau_i
=
\frac{i}{N_{\mathrm{FE}}},
\label{eq:flow_time_fci}
\end{equation}
where ${N_{\mathrm{FE}}}$ is the number of rectified-flow integration steps. The first \(K_{\mathrm{ov}}\) actions of the new chunk are constrained as
\begin{equation}
A^{(i)}_{1:K_{\mathrm{ov}}}
=
(1-\tau_i)\epsilon_{\mathrm{ov}}
+
\tau_i Y_{\mathrm{ov}} ,
\label{eq:flow_consistent_overlap}
\end{equation}
where $A^{(i)}$ denotes the new chunk state at flow step $i$. The remaining free actions are updated by the standard rectified-flow
integration, while the overlap constraint in
Eq.~\eqref{eq:flow_consistent_overlap} is re-applied after each flow step. The detailed update rule and comparison with hard inpainting, together with its advantages, are provided in
Appendix~\ref{app:real_time_equivalent_analysis}.

\section{Experiments}

\subsection{Experiments on LIBERO Benchmark}

We evaluate CT-VAM on the LIBERO benchmark, including LIBERO-Spatial, LIBERO-Object, LIBERO-Goal, and LIBERO-Long, which cover spatial reasoning, object-centric manipulation, goal-conditioned execution, and long-horizon task completion. We follow the standard LIBERO protocol and report task success rate, averaged over 50 rollouts per task. The policy takes two visual observations, proprioceptive states, and a one-hot task identifier as input, without invoking a large VLM during low-level execution. Unless otherwise stated, CT-VAM uses TARS with 32 learnable action queries, as detailed in Appendix~\ref{app:action-query-initialization}.

\begin{table*}[htb]
\centering
\caption{Performance comparison on the LIBERO benchmark across VLM-based and non-VLM visuomotor policies.}
\label{tab:libero}
\setlength{\tabcolsep}{5pt}
\renewcommand{\arraystretch}{1.08}

\begin{tabular}{lc|cccc|c}
\toprule
\multicolumn{1}{c}{\textbf{LIBERO}} 
& \textbf{Params} 
& \textbf{Spatial} 
& \textbf{Object} 
& \textbf{Goal} 
& \textbf{Long} 
& \textbf{Avg.} \\
\midrule

\multicolumn{7}{c}{\textit{Policy \textbf{With} VLM}} \\

OpenVLA~\cite{kim2025openvla} \venue{CoRL}
& 7000M & 84.7 & 88.4 & 79.2 & 53.7 & 76.5 \\

SpatialVLA~\cite{qu2025spatialvla} \venue{RSS}
& 4000M & 88.2 & 89.9 & 78.6 & 55.5 & 78.1 \\

$\pi_0$-FAST~\cite{pertsch2025fast} \venue{RSS}
& 3300M & \underline{96.4} & \underline{96.8} & 88.6 & 60.2 & 85.5 \\

$\pi_0$~\cite{black2025pi0} \venue{RSS}
& 3300M & 90.0 & 86.0 & \underline{95.0} & \underline{73.0} & 86.0 \\

\midrule

\multicolumn{7}{c}{\textit{Policy \textbf{Without} VLM}} \\

Diffusion Policy~\cite{chi2025diffusion} \venue{IJRR}
& -- & 78.3 & 92.5 & 68.3 & 50.5 & 72.4 \\

MDT~\cite{reuss2024mdt} \venue{RSS}
& $\sim$225M & 78.5 & 87.5 & 73.5 & 64.8 & 76.1 \\

\rowcolor{tablerowgray}
\textbf{CT-VAM (Ours)}
& \textbf{68M} & \textbf{89.0} & \textbf{94.6} & \textbf{78.4} & \textbf{66.2} & \textbf{82.1} \\

\bottomrule
\end{tabular}
\end{table*}

Table~\ref{tab:libero} compares CT-VAM with representative VLM-based and non-VLM visuomotor policies. With only 68M parameters, CT-VAM achieves competitive overall performance and outperforms existing non-VLM baselines on average, while remaining much smaller than billion-scale VLM policies. These results support our central claim that, once task intent is grounded, strong closed-loop visuomotor execution can be achieved by a compact vision-action policy without repeatedly using a large language-conditioned controller.

We further conduct two diagnostic studies on LIBERO, with detailed results reported in Appendix~\ref{app:aq_ablation_libero} and \ref{app:scaling_libero}. First, we ablate the number and learnability of TARS action queries, showing that 32 learnable queries provide the best balance between action representation capacity and optimization stability. Second, we study scaling behavior across flow-matching policies with different overall model sizes. The small, base, and large variants achieve average success rates of 65.8\%, 76.9\%, and 77.3\%, respectively, while CT-VAM reaches 82.1\%, outperforming even the largest variant. This indicates that CT-VAM's gain does not come from model size alone, but from the proposed stream-routing and action-query design.

\subsection{Real-World Experiments}
\label{app:real-world exper}

\begin{figure}[h]
    \centering
\includegraphics[width=1\linewidth]{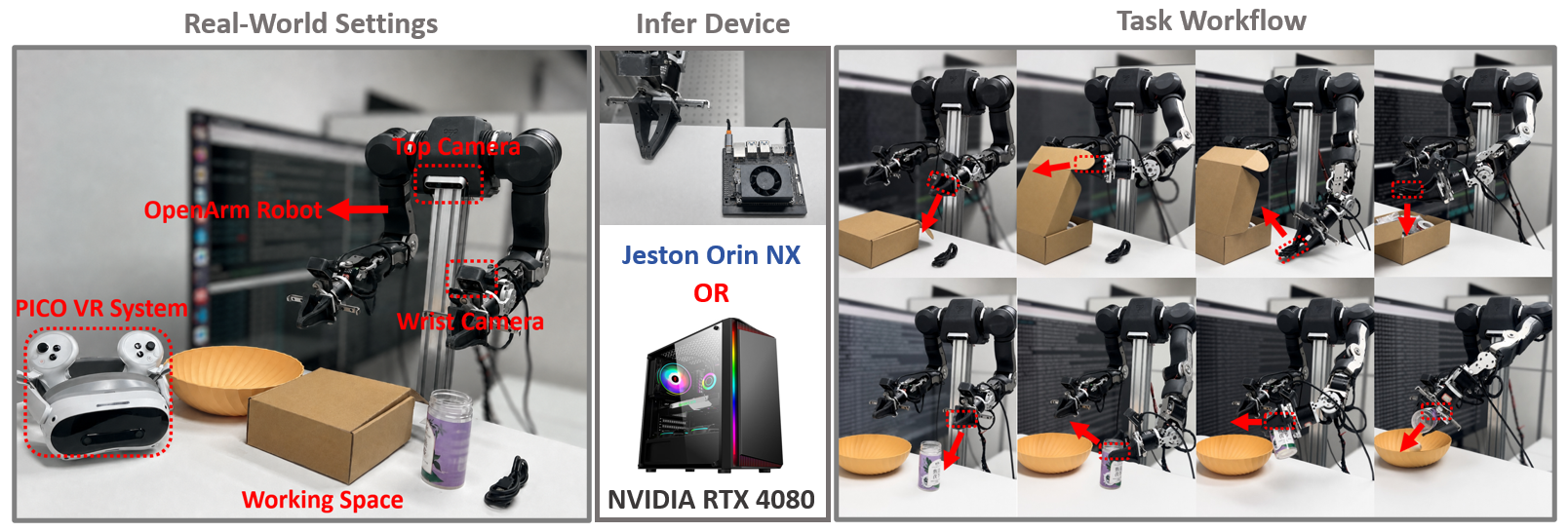}
    \caption{Real-world experimental setup and task workflow. Left: the OpenArm platform. Middle: inference devices used for deployment, including Jetson Orin NX and NVIDIA RTX 4080. Right: representative execution sequences of the real-world manipulation tasks, including box opening and object placement, and bottle pouring into a target bowl.}

\label{fig:real_world_setup}
\end{figure}

\textbf{Real-World Deployment}. 
We conduct real-world experiments on the OpenArm platform to evaluate the closed-loop manipulation performance of CT-VAM. The hardware setup, camera configuration, teleoperation device, and computing platforms are detailed in Appendix~\ref{app:real_world_exper}, and the Jetson Orin NX deployment with TensorRT optimization is described in Appendix~\ref{app:jetson_trt_deployment}. During deployment, CT-VAM is integrated
into the OpenArm control framework as a left-arm joint-space controller. The
policy takes third-person RGB observations, wrist RGB observations, and robot
proprioceptive states as inputs, and predicts short-horizon joint commands
together with gripper actions in a closed-loop receding-horizon manner. For data collection, we collect 30 demonstration episodes for each task using
teleoperation and train all models with a batch size of 32. To ensure a fair
comparison, all methods use the same demonstration data, observation-action
space, and evaluation protocol. During evaluation, object and target
configurations are varied within the reachable workspace of the active arm. We
conduct 20 trials for each task and report the task success rate and average
task execution time as the primary evaluation metrics. A trial is considered
successful only if the robot completes the task without human intervention. Following Sec.~\ref{subsec:efficient_inference}, CT-VAM executes actions in a
closed-loop receding-horizon manner. When FCI is enabled,
the next action chunk is generated asynchronously while the current chunk is
being executed, and the overlapping actions are constrained to ensure smooth
transitions between consecutive chunks. The predicted commands are then sent to
the low-level OpenArm controller for execution.

\textbf{Real-World Results}. We evaluate CT-VAM on two real-world tabletop manipulation tasks. Ball Pouring is used for quantitative comparison with baseline methods and ablated variants, while Box Opening and Placement is used to examine long-horizon task-conditioned execution under subtask switching. Table~\ref{tab:real_world_ball_pouring_platform_methods} reports the Ball Pouring results across two deployment platforms. Each method-platform setting is evaluated over 20 trials, and we report task success rate, execution time, and inference latency. Following the reference configurations, $\pi_0$ is executed with 50 inference steps, while CT-VAM and Diffusion Policy use 8 inference steps. On the RTX 4080 platform, CT-VAM achieves a success rate comparable to $\pi_0$ and clearly higher than Diffusion Policy. The comparison between CT-VAM variants shows that FCI maintains comparable task success while reducing execution time by overlapping next-chunk inference with
current action execution. Under a 20 Hz low-level control frequency, FCI further
allows the real-time-equivalent deployment frequency to approach the target
20 Hz control rate by hiding most inference latency inside physical action
execution; the detailed calculation is provided in
Appendix~\ref{app:real_time_equivalent_frequency}. The Jetson Orin NX results
further demonstrate that CT-VAM can be deployed on a resource-constrained edge
platform, with the TensorRT-optimized inference backend detailed in
Appendix~\ref{app:jetson_trt_deployment}.

\begin{table*}[t]
    \centering
    \caption{Real-world performance on the Ball Pouring task across different methods and deployment platforms. Each method-platform setting is evaluated over 20 trials. We report the task success rate, execution time, and average inference time.}
    \label{tab:real_world_ball_pouring_platform_methods}
    \setlength{\tabcolsep}{5pt}
    \renewcommand{\arraystretch}{1.08}
    \resizebox{\textwidth}{!}{
    \begin{tabular}{lccc ccc}
        \toprule
        \multirow{2}{*}{Method}
        & \multicolumn{3}{c}{RTX 4080}
        & \multicolumn{3}{c}{Jetson Orin NX} \\
        \cmidrule(lr){2-4} \cmidrule(lr){5-7}
        & Success Rate (\%) $\uparrow$
        & Exec. Time (s) $\downarrow$
        & Infer. Time (ms) $\downarrow$
        & Success Rate (\%) $\uparrow$
        & Exec. Time (s) $\downarrow$
        & Infer. Time (ms) $\downarrow$ \\
        \midrule
        Diffusion Policy
        & 70.0 & 27.21 & 303.27
        & N.T. & N.T. & N.T. \\

        $\pi_0$
        & 95.0 & 7.82 & 117.24
        & N.D. & N.D. & N.D. \\

        CT-VAM w/o FCI
        & \textbf{100} & 8.33 & \textbf{56.32}
        & 85 & 10.24 & 256.24 \\

        CT-VAM w/ FCI
        & 95 & \textbf{6.41} & 56.84
        & \textbf{90} & \textbf{7.23} & \textbf{200.60} \\
        \bottomrule
    \end{tabular}
    }
\footnotesize{\emph{Note:} N.T. indicates that the method was not tested on Jetson Orin NX. N.D. indicates that the method could not be deployed due to memory constraints.}
\end{table*}

\label{sec:result}

\section{Limitations}
This work has several limitations. First, CT-VAM currently uses one-hot task
tokens as compact intent representations and does not yet include a full
language grounding module. This design isolates the evaluation of the low-level
visuomotor policy, but does not address open-ended instruction grounding. Future work will integrate a dedicated grounding module that maps raw language and scene context into task-relevant intent representations, following recent progress in language-conditioned manipulation and language-to-spatial grounding for robot control~\cite{shridhar2022cliport,shridhar2023peract,huang2023voxposer}. Second, our real-world
evaluation is limited to tabletop tasks and uses manual task switching in the
long-horizon experiment. Extending CT-VAM to automatic subtask recognition,
broader task distributions, and more diverse robot embodiments remains an
important direction.


\section{Conclusion}
\label{sec:conclusion}

We presented CT-VAM, a compact vision-action model for efficient visuomotor control. By separating language grounding from low-level visual feedback control, CT-VAM avoids repeated language-conditioned inference during execution. The proposed TARS module improves heterogeneous stream fusion, while FCI enables asynchronous chunk execution with smooth action transitions. Experiments on LIBERO and real-world manipulation tasks show that CT-VAM achieves competitive performance with low inference cost, supporting practical deployment on resource-constrained robotic platforms.

\bibliography{example}  

\clearpage
\appendix
\section{Theoretical Details}
\label{app:theoretical_details}

\subsection{Interpretation of Grounded Intent}
\label{app:grounded_intent_interpretation}

The grounded intent \(G\) is not assumed to be a language string. It may be an
explicit task identifier, a latent instruction feature, a goal-like
representation, or a structured state that encodes the task-relevant information
required for control. In our implementation, the one-hot task identifier follows
this setting and serves as a compact intent representation for the considered
task family.

Importantly, \(G\) does not need to be semantically equivalent to the original
language instruction \(L\). Different surface forms of language may correspond
to the same grounded intent if they imply the same robot behavior. Conversely,
language-specific details such as politeness, surface wording, syntactic form,
and other redundant expressions may be discarded as long as they do not affect
the required control behavior.

The residual influence \(\Delta_t=I(A_t;L\mid G,O_t)\) is different from
\(I(A_t;L\mid O_t)\). The latter measures whether language provides additional
information about the action when only the visual-proprioceptive history is
given. In general, \(I(A_t;L\mid O_t)\) can remain large, because the same
observation history may be compatible with multiple task goals. In contrast,
\(\Delta_t\) asks whether the raw instruction still matters after the grounded
intent \(G\) has already been formed.

\subsection{Rationale and Scope of Assumption~\ref{assump:approx_action_sufficient_grounding}}
\label{app:assumption_rationale}

Assumption~\ref{assump:approx_action_sufficient_grounding} is based on the
classical idea of sufficient statistics: a representation can replace the
original input for a downstream decision if it retains the information in the
input that is relevant to that decision~\citep{fisher1922mathematical,
cover2006elements}. In our setting, the original input is the raw instruction
\(L\), the learned representation is the grounded intent \(G\), and the
downstream decision is the next action \(A_t\) under the execution context
\(O_t\). Therefore, if the grounding process is accurate enough, \(G\) should
preserve the action-relevant content of \(L\), while discarding language details
that do not affect control. This is also consistent with the information
bottleneck view, where a useful representation compresses the input while
preserving information relevant to the target variable~\citep{tishby1999information}.

This assumption is not meant to hold for an arbitrary encoded feature. It only
holds when the grounding module has learned an intent representation that is
sufficient for the considered task family. In particular, the tuple
\((l,g,o_t)\) in Eq.~\eqref{eq:approx_action_sufficient_grounding} is not an
arbitrary combination of language and intent; \(g\) is the grounded
representation induced by \(l\) and the initial context through the grounding
process. If the instruction contains object bindings, constraints, subgoals, or
execution preferences that are not encoded in \(G\), then the KL divergence in
Eq.~\eqref{eq:approx_action_sufficient_grounding} may be large, and the raw
instruction may still provide additional information for action generation.
Thus, Assumption~\ref{assump:approx_action_sufficient_grounding} should be read
as a condition on the quality of grounding rather than as a universal property
of all language encoders.

\subsection{Proof of Proposition~\ref{prop:epsilon_control_sufficiency}}
\label{app:proof_control_sufficiency}

\begin{proof}
By the KL-divergence form of conditional mutual information,
\begin{align}
    I(A_t;L\mid G,O_t)
    =
    \mathbb{E}_{p(l,g,o_t)}
    \left[
    D_{\mathrm{KL}}
    \left(
    p(A_t\mid l,g,o_t)
    \,\Vert\,
    p(A_t\mid g,o_t)
    \right)
    \right].
    \label{eq:cmi_kl_form_grounding}
\end{align}
Assumption~\ref{assump:approx_action_sufficient_grounding} bounds each KL term
inside the expectation by \(\epsilon_t\). Therefore,
\begin{equation}
    I(A_t;L\mid G,O_t)
    \leq
    \mathbb{E}_{p(l,g,o_t)}[\epsilon_t]
    =
    \epsilon_t .
\end{equation}
This is exactly the condition in
Definition~\ref{def:control_sufficient_intent}. Hence, \(G\) is
\(\epsilon_t\)-control-sufficient for \(L\) at time \(t\).
\end{proof}

\section{Details of Proposed Method}
\label{app:method_details}

\subsection{Functional Analogy}
\label{app:functional_analogy}

The term \emph{cerebello-thalamic-inspired} denotes a functional analogy rather
than a literal biological model. CT-VAM is motivated by a modular separation
between high-level intent processing and low-level visuomotor execution. In this
view, VAM serves as a compact cerebellum-like visuomotor module that stores
learned procedural control knowledge and generates low-level action chunks from
visual-proprioceptive observations and task conditions. A high-level VLM can be
attached above it as a brain-like module for language understanding, reasoning,
and subtask-level intent updates.

This separation keeps costly language processing outside the high-frequency
control loop while preserving closed-loop visuomotor control. Within VAM, TARS
provides the thalamic part of the analogy by routing and gating heterogeneous
action, visual, proprioceptive, and task streams through stream-separated
attention, thereby regulating how different information pathways contribute to
the action update.

\subsection{Dual-View Visual Conditioning Details}
\label{app:visual_conditioning_details}

We encode both camera streams using a frozen ViT backbone~\cite{Simeoni2025DINOv3}.
For view \(v\in\{\mathrm{pri},\mathrm{wri}\}\), observation step \(t\), and
visual layer \(\ell\), the backbone outputs
\begin{equation}
Z^v_{\ell,t}
=
[c^v_{\ell,t}; R^v_{\ell,t}; P^v_{\ell,t}]
\in
\mathbb{R}^{(1+R+N)\times d_b},
\label{eq:appendix_vit_tokens}
\end{equation}
where \(c^v_{\ell,t}\), \(R^v_{\ell,t}\), and \(P^v_{\ell,t}\) denote the class,
register, and patch tokens, respectively. Here \(R\) is the number of register
tokens, \(N\) is the number of patch tokens, and \(d_b\) is the backbone feature
dimension. All tokens are projected to the policy width \(d\):
\begin{equation}
\hat Z^v_{\ell,t}=Z^v_{\ell,t}W_{\mathrm{vis}},
\qquad
W_{\mathrm{vis}}\in\mathbb{R}^{d_b\times d}.
\label{eq:appendix_visual_projection}
\end{equation}

To combine the two views, we use wrist-query cross-attention. Starting from
\begin{equation}
F^{(0)}_{\ell,t}=\hat Z^{\mathrm{wri}}_{\ell,t},
\label{eq:appendix_fusion_init}
\end{equation}
each fusion block updates
\begin{equation}
\begin{aligned}
\bar F^{(m+1)}_{\ell,t}
&=
F^{(m)}_{\ell,t}
+
\operatorname{MHA}
\left(
\operatorname{LN}(F^{(m)}_{\ell,t}),
\operatorname{LN}(\hat Z^{\mathrm{pri}}_{\ell,t}),
\operatorname{LN}(\hat Z^{\mathrm{pri}}_{\ell,t})
\right),\\
F^{(m+1)}_{\ell,t}
&=
\bar F^{(m+1)}_{\ell,t}
+
\operatorname{MLP}
\left(
\operatorname{LN}(\bar F^{(m+1)}_{\ell,t})
\right).
\end{aligned}
\label{eq:appendix_wrist_query_fusion}
\end{equation}
After \(M_f\) fusion blocks, the fused token bank is
\begin{equation}
\tilde Z_{\ell,t}
=
F^{(M_f)}_{\ell,t}
=
[\tilde c_{\ell,t};\tilde R_{\ell,t};\tilde P_{\ell,t}]
\in
\mathbb{R}^{(1+R+N)\times d}.
\label{eq:appendix_fused_tokens}
\end{equation}

From the fused token bank, we construct the dense spatial memory using the
patch tokens:
\begin{equation}
D_\ell
=
\operatorname{Concat}_{t=1}^{T_o}
\tilde P_{\ell,t}
\in
\mathbb{R}^{T_oN\times d}.
\label{eq:appendix_dense_spatial_memory}
\end{equation}
Here \(D_\ell\) denotes the patch-token-based dense spatial memory at visual
layer \(\ell\). The class/register tokens are not used as an additional
observation-dependent memory stream in TARS; instead, the decoder-side action
queries are initialized from the pretrained class/register global tokens and
then optimized jointly with the policy, as described in
Appendix~\ref{app:action-query-initialization}.

The task identifier \(y\in\{0,1\}^{N_{\mathrm{task}}}\) is embedded as
\begin{equation}
r
=
yW_{\mathrm{task}},
\qquad
W_{\mathrm{task}}
\in
\mathbb{R}^{N_{\mathrm{task}}\times d}.
\label{eq:appendix_task_embedding}
\end{equation}
Keeping \(r\) separate from dense visual tokens allows the decoder to access
task information without mixing it into the patch-level spatial memory.


\subsection{Rectified-Flow Training and Inference Details}
\label{app:rectified_flow_details}

We train the decoder to predict the velocity field between a base action sample
and the expert action chunk. For an expert chunk
\(a_{1:H}\), we use its flattened representation
\begin{equation}
A=\operatorname{vec}(a_{1:H})\in\mathbb{R}^{H d_a}.
\label{eq:appendix_flattened_action}
\end{equation}
During training, we sample
\begin{equation}
\tau\sim\operatorname{Beta}(2,2),
\qquad
\tau\leftarrow\operatorname{clip}(\tau,\tau_{\min},\tau_{\max}),
\qquad
\epsilon\sim\mathcal U[-1,1]^{H d_a},
\label{eq:appendix_rf_sampling}
\end{equation}
and construct the linear interpolation
\begin{equation}
A_\tau=(1-\tau)\epsilon+\tau A.
\label{eq:appendix_rf_interpolation}
\end{equation}
The target velocity is therefore
\begin{equation}
v^\star=A-\epsilon.
\label{eq:appendix_rf_velocity_target}
\end{equation}

The decoder predicts a conditional velocity field:
\begin{equation}
v_\theta(A_\tau,\tau,\mathcal C),
\qquad
\mathcal C
=
\left\{
\mathcal M_\ell^{\mathrm{aq}},
\mathcal M_\ell^{\mathrm{dense}},
\mathcal M_\ell^{\mathrm{task}}
\right\}_{\ell=1}^{L},
\label{eq:appendix_velocity_field}
\end{equation}
where \(\mathcal C\) denotes the layer-wise conditioning streams used by TARS.
Specifically, \(\mathcal M_\ell^{\mathrm{aq}}\) is the learnable action-query
stream, \(\mathcal M_\ell^{\mathrm{dense}}=[D_\ell;S]\) is the dense
visual-proprioceptive stream, and \(\mathcal M_\ell^{\mathrm{task}}=\{r\}\) is
the task stream. The self stream is not included in \(\mathcal C\), because it
is produced from the noised action state \(A_\tau\), which is already an
explicit input to the velocity field.
After the final decoder layer, we regress the velocity from the
action tokens:
\begin{equation}
v_\theta(A_\tau,\tau,\mathcal C)
=
W_{\mathrm{out}}
\operatorname{vec}
\left(
\operatorname{LN}(X_L)
\right)
+
b_{\mathrm{out}} .
\label{eq:appendix_velocity_regression}
\end{equation}
where \(W_{\mathrm{out}}\) and \(b_{\mathrm{out}}\) are the output projection matrix and bias. The training objective is the masked element-wise mean squared error:
\begin{equation}
\mathcal L_{\mathrm{RF}}
=
\frac{1}{\sum_j m_j}
\sum_{j=1}^{H d_a}
m_j
\left(
v_{\theta,j}(A_\tau,\tau,\mathcal C)
-
v^\star_j
\right)^2 .
\label{eq:appendix_rf_loss}
\end{equation}

At inference time, we initialize \(A^{(0)}\sim\mathcal U[-1,1]^{H d_a}\) and
integrate
\begin{equation}
A^{(n+1)}
=
A^{(n)}
+
\Delta\tau\,
v_\theta
\left(
A^{(n)},\tau_n,\mathcal C
\right),
\qquad
\tau_n=\frac{n}{N_{\mathrm{FE}}},
\qquad
\Delta\tau=\frac{1}{N_{\mathrm{FE}}}.
\label{eq:appendix_rf_integration}
\end{equation}
The final vector is reshaped into the predicted action chunk \(a_{1:H}\).


\subsection{Action-Query Initialization and Expansion}
\label{app:action-query-initialization}

The action-query stream in CT-VAM is initialized from the register tokens of the
pretrained DINOv3 visual backbone~\cite{Simeoni2025DINOv3}, rather than from
purely random latent slots. Specifically, for the ViT-S+ backbone, DINOv3
provides four register tokens. Let \(N_0=4\) denote the number of register
tokens, and let \(d_b\) denote the backbone feature dimension. The pretrained
register-token embeddings are written as
\begin{equation}
E^{\mathrm{glob}}
=
\begin{bmatrix}
e^{\mathrm{reg}}_1 \\
\vdots \\
e^{\mathrm{reg}}_{N_0}
\end{bmatrix}
\in
\mathbb{R}^{N_0 \times d_b}.
\label{eq:global_register_tokens}
\end{equation}

These tokens are first projected to the policy hidden dimension:
\begin{equation}
Q^{\mathrm{proto}}
=
E^{\mathrm{glob}} W_{\mathrm{vis}}
\in
\mathbb{R}^{N_0 \times d},
\label{eq:action_query_prototypes}
\end{equation}
where \(W_{\mathrm{vis}}\in\mathbb{R}^{d_b\times d}\) is the visual projection
matrix. The projected register tokens are used as action-query prototypes.

When the number of action queries satisfies \(N_q=N_0\), we directly use these
projected register tokens as the initial action queries. When a larger number of
action queries is required, such as \(N_q=32\), we expand the query set by
duplicating the pretrained prototypes and adding small random perturbations:
\begin{equation}
Q^{\mathrm{act}}_j
=
Q^{\mathrm{proto}}_{1 + ((j-1) \bmod N_0)}
+
\sigma_q \xi_j,
\qquad
j=1,\ldots,N_q,
\label{eq:action_query_expansion}
\end{equation}
where \(\xi_j\in\mathbb{R}^{d}\) is a zero-mean random perturbation and
\(\sigma_q\) controls the perturbation scale. This gives
\begin{equation}
Q^{\mathrm{act}}
\in
\mathbb{R}^{N_q \times d}.
\label{eq:expanded_action_queries}
\end{equation}
The expansion preserves the pretrained register-token prior while providing
more query slots for organizing action-chunk generation.

After initialization, the expanded action-query tensor is used to initialize the
learnable action-query parameter:
\begin{equation}
E_{\mathrm{aq}}
\leftarrow
Q^{\mathrm{act}},
\qquad
E_{\mathrm{aq}}
\in
\mathbb{R}^{N_q \times d}.
\label{eq:eaq_initialization}
\end{equation}
At decoder layer \(\ell\), this parameter forms the action-query memory stream
in TARS:
\begin{equation}
M^{\mathrm{aq}}_{\ell}
=
E_{\mathrm{aq}}.
\label{eq:action_query_memory_stream}
\end{equation}
The current action tokens \(X_\ell\) produce the attention queries, while the
action-query stream provides its own keys and values:
\begin{equation}
Q^a_{\ell}
=
\operatorname{LN}(X_{\ell}) W_q,
\qquad
K^{\mathrm{aq}}_{\ell}
=
\operatorname{LN}(M^{\mathrm{aq}}_{\ell}) W^{\mathrm{aq}}_k,
\qquad
V^{\mathrm{aq}}_{\ell}
=
\operatorname{LN}(M^{\mathrm{aq}}_{\ell}) W^{\mathrm{aq}}_v .
\label{eq:action_query_kv}
\end{equation}
The action-query attention output is then computed as one stream-specific TARS
output:
\begin{equation}
A^{\mathrm{aq}}_{\ell}
=
\operatorname{Softmax}
\left(
\frac{
Q^a_{\ell}
\left(K^{\mathrm{aq}}_{\ell}\right)^{\top}
}{
\sqrt{d_h}
}
\right)
V^{\mathrm{aq}}_{\ell},
\label{eq:action_query_attention_output}
\end{equation}
where \(d_h\) is the attention head dimension.

Together with the self, dense, and task streams, the action-query stream is
aggregated by the stream-separated attention rule:
\begin{equation}
\widehat{X}_{\ell}
=
X_{\ell}
+
\left(
\sum_{b\in\mathcal{B}}
\gamma_b A^b_{\ell}
\right) W_o,
\qquad
\mathcal{B}
=
\{\mathrm{self},\mathrm{aq},\mathrm{dense},\mathrm{task}\}.
\label{eq:tars_stream_aggregation_action_query}
\end{equation}
Here, \(\gamma_{\mathrm{self}}=\gamma_{\mathrm{task}}=1\), while
\(\gamma_{\mathrm{aq}}\) and \(\gamma_{\mathrm{dense}}\) are learned gates, as
defined in the main TARS formulation. The dense stream
\(M^{\mathrm{dense}}_{\ell}=[D_\ell;S]\) and the task stream
\(M^{\mathrm{task}}_{\ell}=\{r\}\) remain unchanged.

Thus, the action-query stream introduces a lightweight pretrained
register-token prior into the action decoder without changing the external
visual, proprioceptive, or task-conditioning streams. In the fixed-query
ablation, \(E_{\mathrm{aq}}\) is kept frozen after initialization. In the default
setting, it is optimized jointly with the rest of the policy.

\subsection{Stream-Separated Normalization in TARS}
\label{app:tars_stream_normalization}

Unlike the shared-softmax Bridge Attention used in VLA-Adapter~\cite{wang2026vlaadapter},
a central design choice in \textsc{TARS} is that heterogeneous conditioning
streams are not normalized within a single shared attention distribution.
Instead, \textsc{TARS} performs stream-wise normalization and then aggregates the
resulting stream outputs. This design is motivated by the fact that the
conditioning memories in our decoder are heterogeneous both semantically and
structurally: the self stream contains action tokens, the action query stream, the dense stream contains many patches and proprioceptive tokens, and the task stream contains a compact task token. A
shared softmax over all these tokens would therefore entangle two distinct
questions: which token is important within a stream, and which stream should
contribute more to the action update.

We formalize this distinction for a single query. Let \(\mathcal B\) denote the
set of conditioning streams. Stream \(b\in\mathcal B\) contains \(M_b\)
key-value tokens, with attention score \(s_{b,j}\) and value \(v_{b,j}\) for
\(j=1,\ldots,M_b\). A shared-softmax attention first pools all tokens from all
streams into one normalization set:
\begin{equation}
    \alpha_{b,j}
    =
    \frac{\exp(s_{b,j})}
    {\sum_{b' \in \mathcal{B}} \sum_{m=1}^{M_{b'}} \exp(s_{b',m})},
    \qquad
    o_{\mathrm{shared}}
    =
    \sum_{b \in \mathcal{B}} \sum_{j=1}^{M_b} \alpha_{b,j} v_{b,j}.
    \label{eq:shared_softmax}
\end{equation}
In this formulation, tokens from all streams directly compete for the same
probability mass. Consequently, the total contribution assigned to a stream is
implicitly affected by its token count. In the degenerate case
\(s_{b,j}=c\) for all \(b,j\), the total attention mass of stream \(b\) becomes
\begin{equation}
    \sum_{j=1}^{M_b} \alpha_{b,j}
    =
    \frac{M_b}{\sum_{b' \in \mathcal{B}} M_{b'}} .
    \label{eq:length_bias}
\end{equation}
Thus, a stream with more tokens receives a larger total mass even when its individual tokens are not more informative. This induces a structural length bias: dense visual patch streams can dominate compact streams such as task or action-query tokens simply because they contain more candidates.

\textsc{TARS} avoids this bias by normalizing each stream independently:
\begin{equation}
    \widetilde{\alpha}_{b,j}
    =
    \frac{\exp(s_{b,j})}
    {\sum_{m=1}^{M_b} \exp(s_{b,m})},
    \qquad
    o_b
    =
    \sum_{j=1}^{M_b} \widetilde{\alpha}_{b,j} v_{b,j}.
    \label{eq:split_softmax}
\end{equation}
This guarantees
\begin{equation}
    \sum_{j=1}^{M_b} \widetilde{\alpha}_{b,j} = 1,
    \qquad \forall b \in \mathcal{B},
    \label{eq:stream_mass}
\end{equation}
independent of the number of tokens in the stream. The stream-wise softmax
therefore assigns probability mass only within each representation type. The
relative strength of different streams is then controlled explicitly by
aggregation coefficients rather than implicitly by representation length:
\begin{equation}
    o_{\mathrm{TARS}}
    =
    \sum_{b \in \mathcal{B}} \gamma_b o_b,
    \label{eq:tars_aggregation}
\end{equation}
where \(\gamma_b\) is either fixed or learned. In our implementation,
\(\gamma_{\mathrm{self}}=\gamma_{\mathrm{task}}=1\), while
\(\gamma_{\mathrm{lat}}\) and \(\gamma_{\mathrm{dense}}\) are learned gates.

This separation has three practical benefits. First, it removes the structural
advantage of long memory streams, making the decoder more robust to changes in
the number of visual patches, observation steps, or auxiliary tokens. Second, it
protects compact but semantically important conditions, such as task identifiers,
from being diluted by dense visual memories. Third, it improves interpretability:
the stream-wise attention weights describe which tokens are selected within each
source, while the gates describe how strongly each source contributes to the
action update. In this sense, \textsc{TARS} treats heterogeneous streams as
parallel information providers rather than forcing them to compete inside a
single attention pool.

\subsection{Flow-Consistent Inpainting and Real-Time Equivalent Execution Analysis}
\label{app:real_time_equivalent_analysis}

We provide the detailed update rule of flow-consistent inpainting and analyze
its exposed stall under the real-time equivalent execution metric.

\paragraph{Flow-consistent overlap constraint.}
Assume that the current action chunk has \(K_{\mathrm{ov}}\) actions remaining.
These actions define the overlap segment
\begin{equation}
Y_{\mathrm{ov}}
\in
\mathbb{R}^{K_{\mathrm{ov}}\times d_a}.
\label{eq:app_overlap_segment}
\end{equation}
The next chunk should start with this segment so that the executed action stream
remains continuous when the controller switches from the current chunk to the
new chunk.

A naive solution is to directly clamp the first \(K_{\mathrm{ov}}\) actions of
the new chunk to \(Y_{\mathrm{ov}}\) at every flow step. However, this is
inconsistent with the rectified-flow training distribution. During training, the
intermediate action state follows
\begin{equation}
A_{\tau}
=
(1-\tau)\epsilon
+
\tau A,
\label{eq:app_rf_interpolation}
\end{equation}
where \(\epsilon\) is the initial noise and \(A\) is the target action chunk.
Therefore, at early flow times, the overlap region should still be close to
noise rather than being directly fixed to the final action value. Hard clamping
would query the decoder on out-of-distribution inputs and may also disturb the
generation of the remaining free actions through self-attention.

Flow-consistent inpainting avoids this issue by constraining the overlap region
to follow the same noise-to-data interpolation as in training. For each new
chunk, we sample a fixed overlap noise
\begin{equation}
\epsilon_{\mathrm{ov}}
\sim
\mathcal{U}[-1,1]^{K_{\mathrm{ov}}\times d_a}.
\label{eq:app_overlap_noise}
\end{equation}
Let
\begin{equation}
\tau_i
=
\frac{i}{N_{\mathrm{FE}}},
\qquad
i=0,\ldots,N_{\mathrm{FE}} .
\label{eq:app_flow_time}
\end{equation}
At each flow time, the overlap region is constrained as
\begin{equation}
A^{(i)}_{1:K_{\mathrm{ov}}}
=
(1-\tau_i)\epsilon_{\mathrm{ov}}
+
\tau_i Y_{\mathrm{ov}} .
\label{eq:app_flow_consistent_overlap}
\end{equation}
For
\begin{equation}
i=0,\ldots,N_{\mathrm{FE}}-1,
\label{eq:app_flow_step_range}
\end{equation}
the full action chunk is first updated by the standard rectified-flow step:
\begin{equation}
\widetilde{A}^{(i+1)}
=
A^{(i)}
+
\frac{1}{N_{\mathrm{FE}}}
v_{\theta}
\bigl(
A^{(i)},\tau_i,\mathcal{C}
\bigr).
\label{eq:app_rf_update}
\end{equation}
The free region is kept from this update, while the overlap region is projected
back to the flow-consistent path:
\begin{equation}
A^{(i+1)}_{1:K_{\mathrm{ov}}}
=
(1-\tau_{i+1})\epsilon_{\mathrm{ov}}
+
\tau_{i+1}Y_{\mathrm{ov}},
\label{eq:app_overlap_projection_fixed}
\end{equation}
\begin{equation}
A^{(i+1)}_{K_{\mathrm{ov}}+1:H}
=
\widetilde{A}^{(i+1)}_{K_{\mathrm{ov}}+1:H}.
\label{eq:app_overlap_projection_free}
\end{equation}
By construction, the final overlap satisfies
\begin{equation}
A^{(N_{\mathrm{FE}})}_{1:K_{\mathrm{ov}}}
=
Y_{\mathrm{ov}},
\label{eq:app_final_overlap}
\end{equation}
which guarantees boundary consistency. At the same time, for every intermediate
flow time, the overlap region remains on the same linear interpolation path as
the rectified-flow training distribution. This is the key difference between
flow-consistent inpainting and hard inpainting: hard inpainting enforces only
the final boundary value, whereas flow-consistent inpainting preserves both
boundary continuity and in-distribution decoder inputs. The computational
overhead of Eq.~\eqref{eq:app_overlap_projection_fixed} is only an element-wise
update over \(K_{\mathrm{ov}}\times d_a\) entries per flow step, which is
negligible compared with evaluating the velocity network \(v_{\theta}\).

\paragraph{Exposed stall under real-time equivalent execution.}
Recall the real-time equivalent execution metric:
\begin{equation}
T
=
N_{\mathrm{actions}}\Delta t
+
S_{\mathrm{method}},
\label{eq:app_tmethod}
\end{equation}
where \(N_{\mathrm{actions}}\Delta t\) is the nominal physical execution time
under the target control period \(\Delta t\), and \(S_{\mathrm{method}}\) is the
inference latency that remains exposed on the control path.

For a synchronous chunk-based policy, each inference call must finish before the
corresponding chunk can be executed. If the policy performs \(n\) inference
calls in one episode and each call takes latency \(L\), the exposed stall is
\begin{equation}
S_{\mathrm{method}}^{\mathrm{sync}}
=
nL .
\label{eq:app_stall_sync}
\end{equation}

For asynchronous flow-consistent inpainting, only the first chunk requires a
cold-start inference before execution starts. After that, each new inference is
launched while the remaining \(K_{\mathrm{ov}}\) actions of the current chunk
are still being executed. These \(K_{\mathrm{ov}}\) actions provide an execution
window of length \(K_{\mathrm{ov}}\Delta t\). Therefore, only the part of the
inference latency that exceeds this window remains exposed:
\begin{equation}
S_{\mathrm{method}}^{\mathrm{FCI}}
=
L_0
+
(n-1)\max(0,L-K_{\mathrm{ov}}\Delta t),
\label{eq:app_stall_async}
\end{equation}
where \(L_0\) is the cold-start latency, \(L\) is the per-chunk inference
latency after execution starts, \(K_{\mathrm{ov}}\) is the overlap length,
\(\Delta t\) is the target control period, and \(n\) is the number of inference
calls in the episode.

Eq.~\eqref{eq:app_stall_async} shows that the asynchronous stall collapses to
the cold-start term whenever
\begin{equation}
L
\leq
K_{\mathrm{ov}}\Delta t.
\label{eq:app_hidden_condition}
\end{equation}
Equivalently, the critical control period is
\begin{equation}
\Delta t^{\star}
=
\frac{L}{K_{\mathrm{ov}}}.
\label{eq:app_critical_dt}
\end{equation}
For any target control period satisfying
\begin{equation}
\Delta t
\geq
\Delta t^{\star},
\label{eq:app_control_period_condition}
\end{equation}
all inference calls after the first chunk can be hidden inside action execution,
and the effective episode duration approaches
\begin{equation}
T
\approx
N_{\mathrm{actions}}\Delta t
+
L_0 .
\label{eq:app_async_episode_time}
\end{equation}

\paragraph{Implementation-specific latency.}
In our implementation on the workspace with RTX 4080, the average rectified-flow inference latency is
\begin{equation}
L
=
56.32\,\mathrm{ms}.
\label{eq:app_measured_latency}
\end{equation}
With an overlap length of
\begin{equation}
K_{\mathrm{ov}}
=
4,
\label{eq:app_overlap_length}
\end{equation}
the critical control period is
\begin{equation}
\Delta t^{\star}
=
\frac{56.32\,\mathrm{ms}}{4}
=
14.08\,\mathrm{ms},
\label{eq:app_measured_critical_dt}
\end{equation}
which corresponds to approximately \(71.0\,\mathrm{Hz}\). Therefore, for a
target control rate of \(20\,\mathrm{Hz}\), i.e.,
\begin{equation}
\Delta t
=
50\,\mathrm{ms},
\label{eq:app_target_control_period}
\end{equation}
the condition \(L\leq K_{\mathrm{ov}}\Delta t\) is satisfied. The inference
latency after the first chunk is fully hidden by action execution, and the
exposed stall becomes
\begin{equation}
S_{\mathrm{method}}^{\mathrm{FCI}}
=
L_0 .
\label{eq:app_hidden_stall}
\end{equation}
Using the same average inference latency for the cold start gives
\begin{equation}
L_0
=
56.32\,\mathrm{ms}.
\label{eq:app_cold_start_latency}
\end{equation}
Thus, the effective episode duration approaches
\begin{equation}
T
\approx
N_{\mathrm{actions}}\Delta t
+
56.32\,\mathrm{ms},
\label{eq:app_final_episode_time}
\end{equation}
instead of accumulating inference latency at every chunk boundary as in
synchronous execution.


\textbf{Properties of FCI}. This design has two important properties. First, it guarantees the boundary
consistency: when \(i=N_{\mathrm{FE}}\), we have \(\tau_i=1\), and
Eq.~\eqref{eq:flow_consistent_overlap} gives
\(A^{(N_{\mathrm{FE}})}_{1:K_{\mathrm{ov}}}=Y_{\mathrm{ov}}\), so the new chunk exactly matches the remaining part of the current chunk. Second, it keeps the decoder input in
distribution: throughout the flow trajectory, the constrained overlap region
follows the same linear noise-to-data path as in training. Under the metric in
Eq.~\eqref{eq:tmethod-v2}, synchronous execution accumulates inference latency
at every chunk boundary, whereas FCI can hide subsequent inference calls whenever
the overlap execution window is long enough.

\section{Experimental Details}
\label{app:apdxc}

\subsection{Ablation on Action Queries in LIBERO Benchmark}
\label{app:aq_ablation_libero}

We ablate the number and learnability of action queries in the TARS decoder, as shown in Table~\ref{tab:aq_ablation_libero}. Action queries serve as latent conditioning slots for action-chunk generation, and their number and adaptability affect how the decoder refines future actions from visual-proprioceptive evidence.

Using only 4 learnable action queries achieves an average success rate of 73.2\%, indicating insufficient conditioning capacity. Increasing the number to 32 gives the best average performance of 82.1\%, with a particularly strong improvement on LIBERO-Object. When the 32 queries are fixed rather than learnable, the average success rate drops to 77.2\%, suggesting that learnability helps adapt these slots to task- and object-dependent action structure. Further increasing the number to 64 reduces the average success rate to 73.0\%, implying that excessive queries may introduce redundant slots and optimization difficulty.

\begin{table*}[htb]
\centering
\caption{Ablation results of action-query number and learnability in the TARS decoder on the LIBERO benchmark.}
\label{tab:aq_ablation_libero}
\setlength{\tabcolsep}{5pt}
\renewcommand{\arraystretch}{1.08}

\begin{tabular}{lcc|cccc|c}
\toprule
\multirow{2}{*}{\textbf{Arch.}} 
& \multicolumn{2}{c|}{\textbf{Action Query}}
& \multirow{2}{*}{\textbf{Spatial}} 
& \multirow{2}{*}{\textbf{Object}} 
& \multirow{2}{*}{\textbf{Goal}} 
& \multirow{2}{*}{\textbf{Long}} 
& \multirow{2}{*}{\textbf{Avg.}} \\
\cmidrule(lr){2-3}
& \textbf{Num.}
& \textbf{Learnable}
& 
& 
& 
& 
& \\
\midrule

TARS
& 4 & \cmark
& \textbf{90.0} & 66.0 & 79.6 & 57.0 & 73.2 \\

\rowcolor{tablerowgray}
TARS
& 32 & \cmark
& 89.0 & \textbf{94.6} & 78.4 & 66.2 & \textbf{82.1} \\

TARS
& 32 & \xmark
& 84.0 & 73.6 & \textbf{82.2} & \textbf{69.0} & 77.2 \\

TARS
& 64 & \cmark
& 87.8 & 75.2 & 74.4 & 54.4 & 73.0 \\

\bottomrule
\end{tabular}
\end{table*}

We also observe that the learned gate associated with the action-query pathway remains small during training, typically around 0.02. This indicates that action queries do not dominate the decoder, but instead act as a weak yet useful action-refinement condition. The main action generation is still driven by visual, proprioceptive, task, and flow-state information, while learnable queries provide a small adaptive bias for organizing action slots. A possible interpretation is that these queries play a role analogous to register tokens in vision transformers, which have been shown to absorb artifact-like or non-local computational components and thereby protect patch tokens from being repurposed for such internal computations~\citep{darcet2024vision}. In this sense, learnable action queries may help buffer nuisance variation in the conditioning stream and allow dense visual tokens to preserve cleaner action-relevant spatial information.

Overall, the ablation shows that action-query design is a nontrivial factor in TARS. Too few queries limit refinement capacity, fixed queries reduce adaptability, and too many queries can hurt optimization. We therefore use 32 learnable action queries as the default configuration in CT-VAM.

\subsection{Scaling Behavior of Flow-matching Policy in  LIBERO Benchmark}
\label{app:scaling_libero}

We study the scaling behavior of flow-matching policies with different overall model sizes, as shown in Table~\ref{tab:fm_libero_results}. The three variants, FM-S, FM-B, and FM-L, increase both trainable and total parameter counts, allowing us to examine whether larger model capacity alone leads to better visuomotor performance under the same LIBERO evaluation protocol. These flow-matching baselines use the same dual-view visual encoder and rectified-flow action head as CT-VAM, but do not introduce the proposed TARS module.

\begin{table*}[htb]
\centering
\caption{Scaling behavior of flow-matching policies with different overall model sizes on the LIBERO benchmark. The three variants increase both trainable and total parameter counts while following the same evaluation protocol.}
\label{tab:fm_libero_results}
\setlength{\tabcolsep}{5pt}
\renewcommand{\arraystretch}{1.08}

\begin{tabular}{lc|cccc|c}
\toprule
\multicolumn{1}{c}{\textbf{LIBERO}} 
& \textbf{Trainable / Total Params} 
& \textbf{Spatial} 
& \textbf{Object} 
& \textbf{Goal} 
& \textbf{Long} 
& \textbf{Avg.} \\
\midrule

FM-S
& 35M / 68M
& 87.0 & 64.2 & 59.6 & 52.4 & 65.8 \\

FM-B 
& 141M / 227M 
& \textbf{89.2} & 93.0 & \textbf{71.0} & 54.2 & 76.9 \\

FM-L 
& 455M / 758M 
& \textbf{89.2} & \textbf{93.4} & 70.6 & \textbf{56.0} & \textbf{77.3} \\

\bottomrule
\end{tabular}
\end{table*}

Scaling from FM-S to FM-B improves the average success rate from 65.8\% to 76.9\%, indicating that increasing model capacity can substantially benefit flow-matching visuomotor policies when moving from a small to a base-size model. The largest improvement appears on LIBERO-Object, suggesting that object-centric manipulation benefits from stronger visual and action-generation capacity. However, further scaling from FM-B to FM-L only increases the average success rate from 76.9\% to 77.3\%, despite a much larger parameter count. This indicates a clear diminishing-return effect: after a certain capacity, simply enlarging the model provides only marginal gains on LIBERO.

Notably, CT-VAM achieves 82.1\% average success with only 68M parameters, outperforming the largest FM-L variant with 758M total parameters. This result suggests that the performance of CT-VAM is not explained by model size alone. Instead, the proposed TARS stream routing and learnable action-query design provide a more effective way to use visual, proprioceptive, task, and action information for closed-loop visuomotor control. These results support our design choice of pursuing a compact but structured vision-action model rather than relying solely on brute-force scaling.

\begin{figure}[h]
    \centering
\includegraphics[width=1\linewidth]{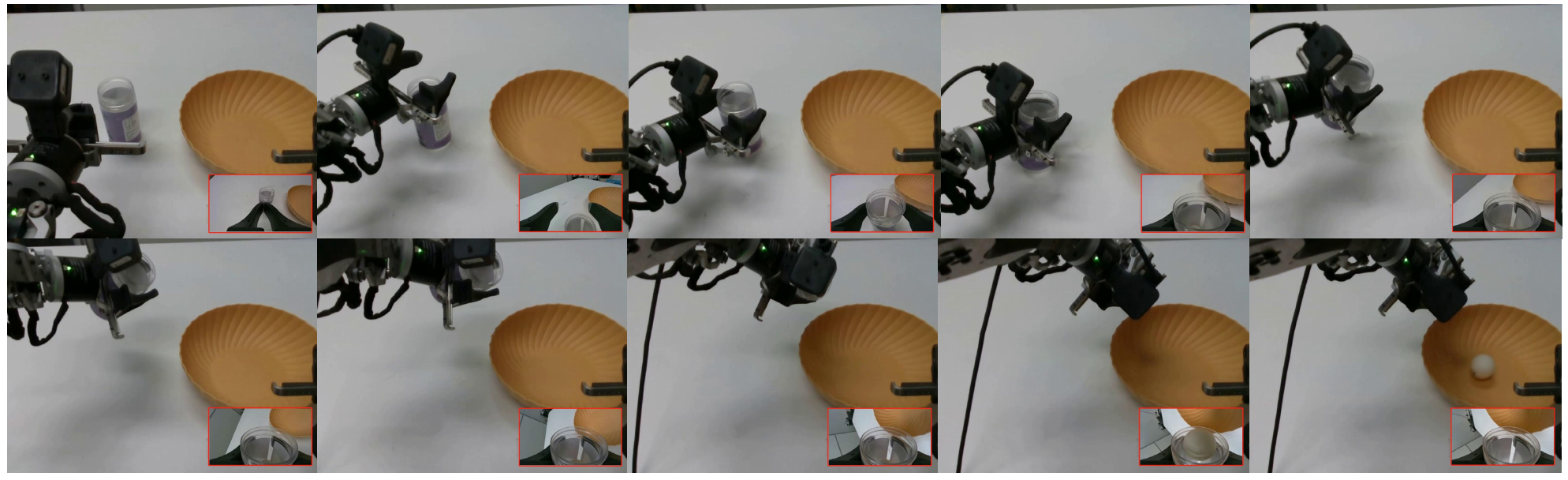}
    \caption{Key frames of the ball pouring task. The robot is required to grasp a bottle from the tabletop, adjust its pose, and pour the contained balls into the target tray. This task evaluates whether CT-VAM can perform stable grasping and fine-grained pouring control under visual feedback. The red inset shows the wrist-camera observation during execution.}
    \label{fig:pour-task-frame}
\end{figure}

\subsection{Real-World Experiments}
\label{app:real_world_exper}

\textbf{Hardware config}. Real-world experiments are conducted on the OpenArm platform. We use one of its
7-DoF arms equipped with a gripper for the evaluated tabletop manipulation
tasks. The perception setup includes an Intel RealSense D435 camera as the
third-person camera and an Intel RealSense D405 camera mounted on the wrist of
the active arm. The D435 provides the external RGB observation, while the D405
provides the wrist-view RGB observation. Demonstrations are collected using a PICO 4 Ultra teleoperation device. For
real-robot execution, experiments are conducted on a workstation equipped with
an NVIDIA RTX 4080. In addition, we evaluate the inference efficiency of CT-VAM
on an NVIDIA Jetson Orin NX platform to assess its edge deployment capability.

\textbf{Data Collection and Training Protocol.}
For each task, we collect 30 demonstration episodes using a PICO 4 Ultra teleoperation device. Each episode is recorded at 30 Hz and contains synchronized RGB observations from both the D435 third-person camera ($640 \times 480$) and the D405 wrist-mounted camera ($424 \times 240$), along with the 7-DoF joint positions and a continuous gripper state.
The action space consists of 7-dimensional joint position targets and a continuous 1-dimensional gripper command, yielding $d_a = 8$ per step. The average episode length is approximately 236 steps for the ball pouring task and 433 steps for the box opening and placement task. All models are trained with a batch size of 32 for 100000 steps.

\textbf{Evaluation Protocol.} Each method--platform configuration is evaluated over 20 independent trials. At the start of each trial, the bottle, the box, or the object is placed at a random position within the robot's reachable tabletop workspace, introducing variation in the initial conditions across trials. A trial is considered successful only if the robot completes the full task without human intervention. For the box opening and placement task, the one-hot task label is switched manually by an operator when the current subtask is visually judged to be complete. No automatic subtask recognition module is used. A trial is counted as successful only if all three subtasks are completed consecutively.

\begin{figure}[htb]
    \centering
    \includegraphics[width=1\linewidth]{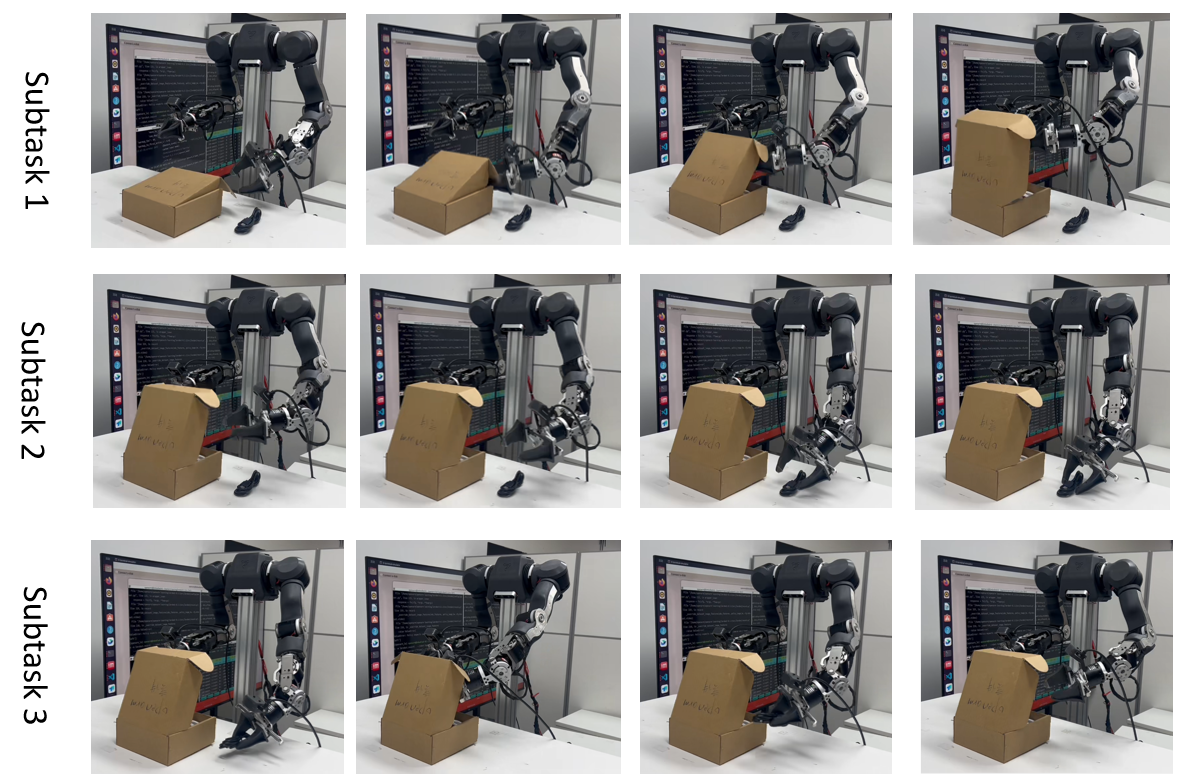}
    \caption{Key frames of the long-horizon box opening and placement task. The task is divided into three manually labeled subtasks during data collection: Subtask 1 opens the box, Subtask 2 grasps the target object, and Subtask 3 places the object into the box. During evaluation, the corresponding one-hot task label is manually switched at subtask boundaries to test whether CT-VAM can execute consecutive subtasks under task-conditioned control.}
    \label{fig:task2}
\end{figure}

\textbf{Ball Pouring: Result Analysis.} Figure~\ref{fig:pour-task-frame} shows representative key frames of a successful ball pouring trial. Across trials, CT-VAM reliably localizes the bottle under varying initial placements, executes stable grasping, and completes the pouring motion without manual intervention in most cases. We observe that the only failure case of CT-VAM with FCI occurs during bottle grasping, where a slightly tilted grasp pose leads to an unstable grasp before pouring. A similar failure mode is also observed for $\pi_0$, suggesting that this error mainly comes from the sensitivity of the bottle grasping stage to small pose deviations, rather than from the flow-consistent inpainting mechanism itself. In contrast, diffusion policy more often fails due to timeout, where the robot makes little or no progress after task initialization. We attribute this behavior to insufficiently consistent and goal-directed action predictions under real-world visual perturbations. In such cases, the predicted joint updates tend to be too small or unstable to drive effective motion through the low-level controller, causing the execution to stall until timeout.

\textbf{Box Opening and Placement.}
We further evaluate CT-VAM on a long-horizon box opening and placement task to
examine whether the policy can execute multiple consecutive subtasks under
task-condition switching. During data collection, each demonstration is
segmented into subtask-level trajectories and assigned a corresponding task
label, which is encoded using the same one-hot task representation as in the
main policy. During evaluation, we manually switch the task label at subtask
boundaries, rather than using an automatic subtask recognition module, because
subtask segmentation and online task inference are outside the scope of this
paper.


The policy is required to sequentially open the box, grasp the target object,
and place it into the box within a single execution episode. This task
therefore evaluates whether the task-conditioning branch can effectively guide
action generation through \textsc{TARS}, and whether the proposed
stream-separated routing mechanism can balance task information with visual and
proprioceptive observations during long-horizon execution. Figure~\ref{fig:task2}
shows the key frames of a representative successful trial. In particular, when
the task condition is switched at a subtask boundary, CT-VAM is able to redirect
the policy toward the next subtask while preserving visually grounded closed-loop
control. These results suggest that \textsc{TARS} can effectively coordinate
task-level intent with visual and proprioceptive evidence, making CT-VAM
compatible with long-horizon manipulation tasks that require sequential
task-conditioned execution.


\subsection{Jetson Orin NX Deployment and TensorRT Optimization}
\label{app:jetson_trt_deployment}

For edge deployment, we run CT-VAM fully onboard on an NVIDIA Jetson Orin NX platform. This platform is designed for embedded AI and robotics applications with constrained power, memory, and compute budgets~\cite{nvidia_jetson_orin}. Since Jetson Orin NX has more limited compute and memory resources than a desktop GPU, directly executing the PyTorch model introduces non-negligible runtime overhead. To better match the intended edge-deployment setting, we convert the trained CT-VAM checkpoint into TensorRT engines for accelerated on-device inference. TensorRT is NVIDIA's inference optimization and runtime SDK for high-performance deep neural network deployment on NVIDIA GPUs, supporting model import from frameworks and intermediate formats such as PyTorch and ONNX~\cite{nvidia_tensorrt,onnx}.


The TensorRT conversion only changes the inference backend. It does not modify
the model architecture, training data, learned parameters, observation-action
space, action horizon, number of flow inference steps, or low-level OpenArm
controller. The same trained checkpoint is used for both desktop-GPU and Jetson
deployment. In our implementation, the PyTorch checkpoint
\texttt{step\_00100000.pt}, trained for 100k steps, is exported through the
PyTorch $\rightarrow$ ONNX $\rightarrow$ TensorRT pipeline. To simplify runtime
execution, the policy is decomposed into two TensorRT subgraphs: a
vision--proprioception encoder and a single flow-matching denoiser step. During
one action-chunk generation, the encoder is executed once, and the denoiser-step
engine is repeatedly invoked for the prescribed number of flow inference steps.

We use TensorRT mixed precision with BF16 enabled and FP32 fallback for layers
that are not supported in BF16. The Jetson software stack is JetPack 6.2
(L4T R36.4.3), CUDA 12.6, TensorRT 10.3.0, and cuDNN 9.3.0. The TensorRT engines
are built with batch size 1, static input shapes, and a 4 GiB workspace. All
latency numbers are measured after runtime warm-up. The reported inference time
corresponds to one complete action-chunk generation, including the
vision--proprioception encoder and all flow-matching denoising steps. It excludes
camera image acquisition, robot communication, and low-level controller
execution, which are not part of the neural policy inference time.

We also verify that the TensorRT backend preserves the numerical behavior of the
policy. On 50 observation samples drawn from the evaluation dataset, the
difference between the PyTorch and TensorRT action outputs is
$3.9\times10^{-3}$ rad in mean absolute error, corresponding to $0.26\%$ of the
mean per-joint action range. The maximum absolute error is
$5.9\times10^{-2}$ rad. This comparison is performed in joint-angle space with
the flow-matching noise fixed across the two backends. These results indicate
that the TensorRT conversion primarily improves deployment efficiency rather
than changing the learned control policy.

The reduced inference latency is particularly important for the Ball Pouring
task. This task requires the robot to grasp the bottle, adjust its pose, and
pour small balls into the target tray under continuous visual feedback. Excessive
inference latency can lead to stale observations, delayed corrective actions,
and less stable pouring behavior. Therefore, improving the onboard inference
speed not only reduces execution time but can also improve the stability of
closed-loop manipulation under the same evaluation protocol.

\subsection{Real-Time-Equivalent Execution Frequency}
\label{app:real_time_equivalent_frequency}

We provide a concise calculation of the real-time-equivalent execution frequency
under FCI. The detailed exposed-stall formulation has been introduced in
Appendix~\ref{app:real_time_equivalent_analysis}; here we only convert it into an
effective low-level action frequency for the real-world deployment setting.

Let the low-level controller run at a target control period $\Delta t$. In our
real-world experiments, the target control frequency is $20~\mathrm{Hz}$, so
\begin{equation}
    \Delta t = 50~\mathrm{ms}.
\end{equation}
CT-VAM predicts action chunks of length
\begin{equation}
    H = 8,
\end{equation}
and FCI uses an overlap length of
\begin{equation}
    K_{\mathrm{ov}} = 4.
\end{equation}

The overlap actions are not executed twice. They are used to enforce boundary
consistency between the tail of the current chunk and the beginning of the next
chunk. When the new chunk becomes active, its first $K_{\mathrm{ov}}$ actions
correspond to the overlap segment and are not counted as newly executed actions.
Therefore, each subsequent FCI chunk advances the executed action sequence by
\begin{equation}
    H_{\mathrm{new}}
    =
    H - K_{\mathrm{ov}}
    =
    4
\end{equation}
new low-level actions. The physical time required to execute these new actions
is
\begin{equation}
    H_{\mathrm{new}}\Delta t
    =
    4 \times 50~\mathrm{ms}
    =
    200~\mathrm{ms}.
\end{equation}
The overlap execution window available for hiding the next inference call is
\begin{equation}
    K_{\mathrm{ov}}\Delta t
    =
    4 \times 50~\mathrm{ms}
    =
    200~\mathrm{ms}.
\end{equation}

Let $L$ denote the measured per-chunk inference latency after execution has
started. The latency that remains exposed on the control path is
\begin{equation}
    L_{\mathrm{exposed}}
    =
    \max\left(0, L - K_{\mathrm{ov}}\Delta t\right).
\end{equation}
Accordingly, the steady-state real-time-equivalent execution frequency under FCI
is
\begin{equation}
    f_{\mathrm{eq}}^{\mathrm{FCI}}
    =
    \frac{H_{\mathrm{new}}}
    {
    H_{\mathrm{new}}\Delta t
    +
    \max\left(0, L - K_{\mathrm{ov}}\Delta t\right)
    }.
\label{eq:fci_equivalent_frequency}
\end{equation}
This expression counts only the newly executed actions and therefore avoids
double-counting the overlap region.

On the RTX 4080 platform, the measured inference latency of CT-VAM with FCI is
\begin{equation}
    L_{\mathrm{RTX}} = 56.84~\mathrm{ms}.
\end{equation}
Since $L_{\mathrm{RTX}} < K_{\mathrm{ov}}\Delta t$, the exposed latency is
\begin{equation}
    L_{\mathrm{exposed}}^{\mathrm{RTX}}
    =
    \max\left(0, 56.84~\mathrm{ms} - 200~\mathrm{ms}\right)
    =
    0.
\end{equation}
Therefore, the real-time-equivalent execution frequency is
\begin{equation}
    f_{\mathrm{eq,RTX}}^{\mathrm{FCI}}
    =
    \frac{4}{200~\mathrm{ms}}
    =
    20.0~\mathrm{Hz}.
\end{equation}

On Jetson Orin NX with the TensorRT-optimized backend, the measured inference
latency of CT-VAM with FCI is
\begin{equation}
    L_{\mathrm{Jetson}} = 200.60~\mathrm{ms}.
\end{equation}
The exposed latency is
\begin{equation}
    L_{\mathrm{exposed}}^{\mathrm{Jetson}}
    =
    \max\left(0, 200.60~\mathrm{ms} - 200~\mathrm{ms}\right)
    =
    0.60~\mathrm{ms}.
\end{equation}
Thus, the real-time-equivalent execution frequency is
\begin{equation}
    f_{\mathrm{eq,Jetson}}^{\mathrm{FCI}}
    =
    \frac{4}{200~\mathrm{ms} + 0.60~\mathrm{ms}}
    \approx
    19.94~\mathrm{Hz}.
\end{equation}

These results show that FCI hides most of the neural inference latency inside
physical action execution. On RTX 4080, the inference latency is fully hidden in
the steady state, and the effective execution frequency reaches the target
$20~\mathrm{Hz}$ low-level control rate. On Jetson Orin NX, only
$0.60~\mathrm{ms}$ remains exposed per FCI stride, so the effective execution
frequency remains close to $20~\mathrm{Hz}$.

\end{document}